\documentclass{article}
\RequirePackage{relsize}
\AtBeginDocument{\larger[1]}



\usepackage[preprint]{neurips_2023}



\usepackage[utf8]{inputenc} 
\usepackage[T1]{fontenc}    
\usepackage{hyperref}       
\usepackage{url}            
\usepackage{booktabs}       
\usepackage{amsfonts}       
\usepackage{nicefrac}       
\usepackage{microtype}      
\usepackage{xcolor}         

\usepackage{amsmath, amsthm, amssymb}
\newtheorem{definition}{Definition}

\newtheorem{theorem}{Theorem}

\usepackage{caption}
\usepackage{graphicx}
\usepackage{subcaption}
\usepackage{multirow}

\usepackage{CJKutf8}
\usepackage{xcolor}
\usepackage{enumitem}

\usepackage[normalem]{ulem}

\bibliographystyle{plainnat}

\title{Causal Inference as Distribution Adaptation: Optimizing ATE Risk under Propensity Uncertainty}

%

\author{
  Ashley Zhang\\
  \texttt{zhangyunzhe2023@berkeley.edu}\\
}

\begin{document}

\maketitle

\begin{abstract}
Standard approaches to causal inference, such as Outcome Regression and Inverse Probability Weighted Regression Adjustment (IPWRA), are typically derived through the lens of missing data imputation and identification theory. In this work, we unify these methods from a Machine Learning perspective, reframing ATE estimation as a \textit{domain adaptation problem under distribution shift}. We demonstrate that the canonical Hajek estimator is a special case of IPWRA restricted to a constant hypothesis class, and that IPWRA itself is fundamentally Importance-Weighted Empirical Risk Minimization designed to correct for the covariate shift between the treated sub-population and the target population.

Leveraging this unified framework, we critically examine the optimization objectives of Doubly Robust estimators. We argue that standard methods enforce \textit{sufficient but not necessary} conditions for consistency by requiring outcome models to be individually unbiased. We define the true "ATE Risk Function" and show that minimizing it requires only that the biases of the treated and control models structurally cancel out. Exploiting this insight, we propose the \textbf{Joint Robust Estimator (JRE)}. Instead of treating propensity estimation and outcome modeling as independent stages, JRE utilizes bootstrap-based uncertainty quantification of the propensity score to train outcome models jointly. By optimizing for the expected ATE risk over the distribution of propensity scores, JRE leverages model degrees of freedom to achieve robustness against propensity misspecification. Simulation studies demonstrate that JRE achieves up to a 15\% reduction in MSE compared to standard IPWRA in finite-sample regimes with misspecified outcome models.
\end{abstract}

\section{Introduction}

Estimating causal effects from observational data is a central challenge in econometrics, biostatistics, and increasingly, machine learning. Unlike Randomized Controlled Trials (RCTs), where treatment assignment is independent of background characteristics, observational studies are plagued by confounding: the treated and control populations often differ systematically in their covariate distributions. The fundamental goal of causal inference in this setting is to adjust for these differences to estimate the Average Treatment Effect (ATE), defined as \(\tau = \mathbb{E}[Y(1) - Y(0)]\), where \(Y(1)\) and \(Y(0)\) denote the potential outcomes under treatment and control, respectively.

Standard approaches to this problem generally fall into two categories: modeling the treatment assignment mechanism (e.g., Inverse Probability Weighting or IPW) or modeling the outcome surface (e.g., Outcome Regression or OR). While both strategies yield consistent estimates under correctly specified models and the assumption of strong ignorability, they suffer from complementary weaknesses. IPW estimators are notorious for high variance when propensity scores are close to zero or one, while Outcome Regression is sensitive to model misspecification. To mitigate these risks, Doubly Robust (DR) estimators, such as the Augmented IPW (AIPW) and Inverse Probability Weighted Regression Adjustment (IPWRA), have become the gold standard. These estimators require only one of the two nuisance models—the propensity score or the outcome model—to be correctly specified to achieve consistency.

However, from a Machine Learning (ML) perspective, the standard justification for these estimators often obscures the underlying mechanism of learning. In this paper, we reframe the problem of causal inference as a problem of \textit{distribution shift adaptation}. We argue that standard Outcome Regression fails not merely because of confounding, but because it performs Empirical Risk Minimization (ERM) on the conditional distribution of the treated units, \(P(X|Z=1)\), while the target inference requires minimizing risk over the full population distribution, \(P(X)\). Under this lens, IPWRA is not just a statistical adjustment but an importance-sampling technique that corrects this covariate shift, aligning the training distribution with the test distribution.

Furthermore, we revisit the optimization objective of Doubly Robust estimators. The standard theoretical derivation for IPWRA ensures that the outcome models, \(\mu_1(X)\) and \(\mu_0(X)\), are individually unbiased estimators of the potential outcomes \(Y(1)\) and \(Y(0)\). We demonstrate that while this condition is \textit{sufficient} to minimize the error of the ATE, it is not \textit{necessary}. The true objective of interest is minimizing the squared error of the difference, or the \(MSE_{ATE}\), defined as:
\begin{equation}
\label{eq:mse_ate}
\mathcal{R}_{ATE} = \mathbb{E}\left[ \left( (\mu_1(X) - Y(1)) - (\mu_0(X) - Y(0)) \right)^2 \right]
\end{equation}
Standard DR methods minimize Equation \eqref{eq:mse_ate} by forcing both error terms to zero independently. We argue that this approach leaves significant "degrees of freedom" on the table. In finite-sample regimes or under model misspecification, enforcing individual unbiasedness may consume model capacity that could otherwise be used to ensure that the biases of \(\mu_1\) and \(\mu_0\) are correlated and cancel each other out.

In this work, we propose a unified framework that views causal estimation as a risk minimization problem under uncertainty. We hypothesize that by explicitly modeling the uncertainty of the propensity score—using the bootstrap to estimate the distribution of \(e(X)\)—we can train outcome models jointly to be robust to propensity estimation errors. This approach moves beyond the binary "correct vs. incorrect" specification dichotomy and allows for a more flexible, distributionally robust optimization of the ATE.

Our contributions are as follows:
\begin{enumerate}
    \item We review the standard observational study framework and re-interpret Outcome Regression and IPWRA through the lens of ML distribution shift and importance sampling.
    \item We provide a theoretical unification, showing that the canonical Hajek IPW estimator is a special case of IPWRA using a dummy constant regressor.
    \item We define the "ATE Risk Function" and demonstrate that training outcome models with IPW weights is a sufficient but not necessary condition for risk minimization.
    \item We propose a robust training procedure that utilizes bootstrap-derived uncertainty quantification of the propensity score to optimize the outcome models jointly, effectively exploiting model degrees of freedom to cancel bias.
    \item We validate these insights through simulation studies, comparing our proposed robust framework against standard AIPW and IPWRA under varying degrees of model misspecification and data sparsity.
\end{enumerate}

By unifying classical causal assumptions with modern learning theory, this paper aims to provide new insights into why doubly robust methods succeed and how they can be further optimized for real-world, finite-sample applications.

\section{Framework and Assumptions}

We adopt the Neyman-Rubin potential outcomes framework \cite{rubin1974estimating}. We assume the observed data consists of \(n\) i.i.d. samples \(\mathcal{D} = \{(X_i, Z_i, Y_i)\}_{i=1}^n\), drawn from a superpopulation \(\mathcal{P}\). Here, \(X_i \in \mathcal{X} \subseteq \mathbb{R}^d\) denotes the pre-treatment covariates, \(Z_i \in \{0, 1\}\) denotes the binary treatment assignment, and \(Y_i \in \mathbb{R}\) denotes the observed outcome.

For each unit \(i\), we posit the existence of two potential outcomes: \(Y_i(1)\), the outcome had the unit been treated, and \(Y_i(0)\), the outcome had the unit been in the control group. The fundamental problem of causal inference is that we observe only one of these outcomes for each unit: \(Y_i = Z_i Y_i(1) + (1-Z_i) Y_i(0)\).

Our primary estimand is the ATE, defined as:
\begin{equation}
\tau = \mathbb{E}[Y(1) - Y(0)] = \mathbb{E}[Y(1)] - \mathbb{E}[Y(0)]
\end{equation}
To identify \(\tau\) from the observational distribution \(P(X, Z, Y)\), we require the following standard assumptions:

\begin{enumerate}
    \item \textbf{SUTVA (Stable Unit Treatment Value Assumption):}
    This assumption consists of two components:
    \begin{itemize}
        \item \textit{No Interference:} The potential outcomes of unit \(i\) are unaffected by the treatment assignment of unit \(j\) (i.e., \(Y_i(z_1, \dots, z_n) = Y_i(z_i)\)).
        \item \textit{Consistency:} The observed outcome \(Y\) equals the potential outcome corresponding to the actual treatment received. This implies that the treatment is well-defined and there are no hidden variations in treatment versions.
    \end{itemize}

    \item \textbf{Ignorability (Unconfoundedness):}
    Conditional on the observed covariates \(X\), the treatment assignment is independent of the potential outcomes:
    \begin{equation}
    \{Y(1), Y(0)\} \perp \!\!\! \perp Z \mid X
    \end{equation}
    This assumption implies that within strata defined by \(X\), the treatment assignment is effectively random. It allows us to equate the observational conditional mean \(\mathbb{E}[Y|X, Z=z]\) with the causal conditional mean \(\mathbb{E}[Y(z)|X]\).

    \item \textbf{Overlap (Positivity):}
    The propensity score \(e(X) = P(Z=1|X)\) is strictly bounded away from 0 and 1:
    \begin{equation}
    0 < e(X) < 1 \quad \forall X \in \mathcal{X}
    \end{equation}
    This ensures that for every value of \(X\) in the population, there is a non-zero probability of observing both treated and control units, preventing the need for extrapolation into regions of the covariate space where data for a specific treatment arm is non-existent.
\end{enumerate}

Under these assumptions, the ATE is non-parametrically identified by the Adjustment Formula:
\begin{equation}
\label{eq:identification}
\tau = \mathbb{E}_X \left[ \mathbb{E}[Y|X, Z=1] - \mathbb{E}[Y|X, Z=0] \right]
\end{equation}

Most estimation methods, including Outcome Regression and IPWRA, aim to estimate the conditional expectations in Equation \eqref{eq:identification} using finite samples.


\section{Outcome Regression and Distribution Shift}
\label{sec:OR}

Having established the identification assumptions, we now turn to estimating the ATE. The most direct approach, commonly referred to as Outcome Regression (OR), involves modeling the conditional expectation of the outcome given covariates and treatment.

While standard causal inference literature treats this as a missing data imputation problem, we analyze it here through the lens of Machine Learning. We argue that Outcome Regression is fundamentally a \textit{distribution shift} problem. The model is trained on one distribution (the treated or control sub-population) but must generalize to a different target distribution (the full population).

\subsection{The Procedure and the Shift}

The standard Outcome Regression procedure estimates the potential outcome means \(\mu_1(x) = \mathbb{E}[Y|X=x, Z=1]\) and \(\mu_0(x) = \mathbb{E}[Y|X=x, Z=0]\) using two separate regression models, \(\hat{f}_1\) and \(\hat{f}_0\).

The training objective for \(\hat{f}_1\) (and analogously for \(\hat{f}_0\)) is to minimize the empirical risk over the sub-population of treated units:
\begin{equation}
    \hat{f}_1 = \arg\min_{f \in \mathcal{F}} \frac{1}{n_1} \sum_{i: Z_i=1} \mathcal{L}(f(X_i), Y_i)
\end{equation}
where \(\mathcal{L}\) is a loss function (e.g., squared error). However, to estimate the ATE \(\tau\), we apply these models to the \textit{entire} sample:
\begin{equation}
    \hat{\tau} = \frac{1}{n} \sum_{i=1}^n (\hat{f}_1(X_i) - \hat{f}_0(X_i))
\end{equation}
This reveals the distribution shift. Let \(P(X)\) denote the marginal distribution of covariates in the full population.
\begin{itemize}
    \item \textbf{Training Distribution:} The model \(\hat{f}_1\) is trained on samples drawn from \(P(X | Z=1)\). By Bayes' rule, \(P(X | Z=1) \propto e(X) P(X)\). The training data is weighted by the propensity score.
    \item \textbf{Target Distribution:} The estimator \(\hat{\tau}\) requires the model to be accurate over \(P(X)\), the full population distribution.
\end{itemize}

If the propensity score \(e(X)\) varies across \(\mathcal{X}\)—which it must for there to be confounding—the training and testing distributions differ. A model that minimizes risk on the treated group is not guaranteed to minimize risk on the full population unless specific conditions are met.

\subsection{Correct Specification and Proper Scoring Rules}

To justify Outcome Regression despite this shift, we must assume \textit{correct specification}. We generalize this concept beyond Mean Squared Error (MSE) to the class of strictly proper scoring rules.

\begin{definition}[Strictly Proper Scoring Rule]
Let \(Y\) be a random variable with conditional distribution \(P(Y|X)\). A loss function \(\mathcal{L}(\hat{y}, y)\) is a \textbf{strictly proper scoring rule} for a statistical functional \(T(P)\) (e.g., the mean) if the expected loss is uniquely minimized by the true functional:
\begin{equation}
    T(P(\cdot|x)) = \arg\min_{z} \mathbb{E}_{Y|X=x} [\mathcal{L}(z, Y)]
\end{equation}
\end{definition}
For example, the Squared Error is a strictly proper scoring rule for the conditional mean, while the Cross-Entropy loss is strictly proper for the conditional probability.

\begin{definition}[Correct Specification]
A hypothesis class \(\mathcal{F}\) is \textbf{correctly specified} with respect to the potential outcome \(Y(z)\) if the true conditional outcome function \(\mu^*_z(x) = \mathbb{E}[Y(z)|X=x]\) satisfies \(\mu^*_z \in \mathcal{F}\).
\end{definition}

This definition implies that the "truth" is attainable: there exists a function in our model class that achieves the Bayes optimal error (the irreducible noise) at every point \(x \in \mathcal{X}\).

\subsection{Equivalence Under Correct Specification}

We now prove that under the assumptions of correct specification and overlap, the distribution shift inherent in Outcome Regression becomes irrelevant. The optimizer on the biased training distribution is identical to the optimizer on the target population distribution.

\begin{theorem}[Invariance to Distribution Shift]
Assume the Overlap assumption holds (\(e(x) > 0 \ \forall x\)). Let \(\mathcal{L}\) be a strictly proper scoring rule for the conditional mean. If the model class \(\mathcal{F}\) is correctly specified (i.e., \(\mu^*_1 \in \mathcal{F}\)), then the minimizer of the population risk \(R_{pop}\) and the conditional treated risk \(R_{treated}\) are identical:
\begin{equation}
    \arg\min_{f \in \mathcal{F}} \mathbb{E}_{X \sim P(X)} \left[ \mathbb{E}_{Y|X, Z=1}[\mathcal{L}(f(X), Y)] \right] = \arg\min_{f \in \mathcal{F}} \mathbb{E}_{X \sim P(X|Z=1)} \left[ \mathbb{E}_{Y|X, Z=1}[\mathcal{L}(f(X), Y)] \right]
\end{equation}
\end{theorem}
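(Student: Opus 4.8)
The plan is to reduce the equality of two \emph{integrated} minimization problems to a single \emph{pointwise} fact, exploiting that a strictly proper scoring rule is minimized at every covariate value independently. For a fixed hypothesis \(f\) and point \(x\), write the conditional risk as \(\rho(f, x) = \mathbb{E}_{Y|X=x, Z=1}[\mathcal{L}(f(x), Y)]\), so that \(R_{pop}(f) = \mathbb{E}_{X \sim P(X)}[\rho(f,X)]\) and \(R_{treated}(f) = \mathbb{E}_{X \sim P(X|Z=1)}[\rho(f, X)]\). By the definition of strict propriety for the conditional mean, the scalar map \(z \mapsto \mathbb{E}_{Y|X=x,Z=1}[\mathcal{L}(z,Y)]\) is uniquely minimized at \(z = \mu^*_1(x) = \mathbb{E}[Y|X=x, Z=1]\) for every \(x\). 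Consequently \(\rho(f, x) \geq \rho(\mu^*_1, x)\) for all \(f\) and all \(x\), with equality if and only if \(f(x) = \mu^*_1(x)\).

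First I would invoke correct specification, \(\mu^*_1 \in \mathcal{F}\), to guarantee that this pointwise optimum is actually realizable inside the hypothesis class. Given this, the key step is that pointwise optimality of \(\mu^*_1\) transfers to \emph{any} weighting of the covariate space: for an arbitrary probability measure \(Q\) on \(\mathcal{X}\), integrating the pointwise inequality yields \(\mathbb{E}_{X\sim Q}[\rho(f,X)] \geq \mathbb{E}_{X\sim Q}[\rho(\mu^*_1, X)]\), with equality iff \(f = \mu^*_1\) holds \(Q\)-almost surely. Applying this with \(Q = P(X)\) and again with \(Q = P(X|Z=1)\) immediately places \(\mu^*_1\) in both argmin sets and characterizes each argmin set as \(\{\, f \in \mathcal{F} : f = \mu^*_1 \ Q\text{-a.s.}\,\}\) for the respective \(Q\).

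The role of the Overlap assumption is precisely to force these two ``almost sure'' equivalence classes to coincide, so that the two argmin sets are \emph{equal} and not merely intersecting at \(\mu^*_1\). By Bayes' rule \(P(X|Z=1) \propto e(X)\,P(X)\), so for any measurable \(A\) we have \(P(A \mid Z=1) = 0\) iff \(\int_A e(x)\, dP(x) = 0\); since \(e(x) > 0\) everywhere, this holds iff \(P(A) = 0\). Thus \(P(X)\) and \(P(X|Z=1)\) are mutually absolutely continuous, and ``\(f = \mu^*_1\) \(P(X)\)-a.s.'' is equivalent to ``\(f = \mu^*_1\) \(P(X|Z=1)\)-a.s.'' The two minimizer sets therefore coincide, which is the claimed identity.

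The step I expect to be the main obstacle is not any calculation but the careful handling of what ``\(\arg\min\)'' denotes: since any function agreeing with \(\mu^*_1\) off a null set is also a minimizer, the honest statement is equality of minimizer \emph{sets} (equivalently, uniqueness up to almost-sure equivalence). The entire weight of the theorem rests on Overlap at exactly this point — without \(e(x) > 0\) on a region of positive \(P\)-mass, a function could match \(\mu^*_1\) on the treated support while deviating elsewhere, minimizing \(R_{treated}\) but not \(R_{pop}\), and the identity would fail. Making the equivalence-of-measures argument rigorous, and flagging that it is where Overlap becomes indispensable, is the crux of the proof.
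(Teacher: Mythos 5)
Your proposal is correct and follows essentially the same route as the paper: pointwise optimality of \(\mu^*_1\) under the strictly proper scoring rule, realizability via correct specification, and then transfer of that optimality to both covariate distributions using the strict positivity of \(e(x)\). Your treatment is somewhat more rigorous than the paper's on one point—you make explicit that the argmins are equivalence classes of functions and that Overlap yields mutual absolute continuity of \(P(X)\) and \(P(X|Z=1)\), which the paper handles informally via its ``deviates on a set of non-zero measure'' remark—though you should also invoke Ignorability (as the paper does) to identify the observational mean \(\mathbb{E}[Y|X,Z=1]\) with the potential-outcome mean \(\mu^*_1\) appearing in the correct-specification hypothesis.
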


\begin{proof}
Let \( r(f, x) = \mathbb{E}_{Y|X=x, Z=1} [\mathcal{L}(f(x), Y)] \) be the pointwise risk at a specific covariate value \( x \). 
By the Ignorability assumption, \( Y \perp Z \mid X \), which implies \( P(Y|X=x, Z=1) = P(Y(1)|X=x) \). Thus, minimizing the risk of the observed treated outcome is equivalent to minimizing the risk of the potential outcome \( Y(1) \).

Since \( \mathcal{L} \) is a strictly proper scoring rule, the expected loss \( r(f, x) \) is minimized uniquely by the true conditional functional \( \mu^*_1(x) \).
Because the hypothesis class \( \mathcal{F} \) is correctly specified (i.e., \( \mu^*_1 \in \mathcal{F} \)), the function \( \mu^*_1 \) achieves the global minimum of the risk pointwise for all \( x \in \mathcal{X} \):
\begin{equation}
    r(\mu^*_1, x) \leq r(f, x) \quad \forall f \in \mathcal{F}, \forall x \in \mathcal{X}
\end{equation}

Now consider the aggregate risks over the population and treated distributions.
For the population risk:
\begin{equation}
    R_{pop}(f) = \mathbb{E}_{X \sim P(X)} [r(f, X)] = \int_{\mathcal{X}} r(f, x) dP(x)
\end{equation}
Since \( r(f, x) \) is minimized by \( \mu^*_1(x) \) for all \( x \), the integral is minimized by \( f = \mu^*_1 \).

For the treated (training) risk:
\begin{equation}
    R_{treated}(f) = \mathbb{E}_{X \sim P(X|Z=1)} [r(f, X)] = \int_{\mathcal{X}} r(f, x) \underbrace{\frac{e(x)}{P(Z=1)} dP(x)}_{dP(X|Z=1)}
\end{equation}
The Overlap assumption ensures that \( e(x) > 0 \) for all \( x \in \mathcal{X} \). Therefore, the weighting factor \( \frac{e(x)}{P(Z=1)} \) is strictly positive everywhere on the support of \( P(X) \).
Minimizing a weighted sum of non-negative loss gaps \( r(f, x) - r(\mu^*_1, x) \) with strictly positive weights yields the same solution as the unweighted minimization. Any function \( f \) that deviates from \( \mu^*_1 \) on a set of non-zero measure will increase the risk in both integrals.

Thus, under correct specification and overlap, the minimizer for both objectives is identical: \( f^* = \mu^*_1 \).
\end{proof}

The same argument holds for the control group, assuming \(e(x) < 1\).

This result provides the theoretical justification for Outcome Regression. If our model class is rich enough to contain the truth (e.g., a neural network with infinite capacity or a correctly specified parametric model) and we have infinite data, the implicit weighting by \(e(X)\) in the training set does not bias the solution. The model fits the conditional expectation perfectly everywhere, rendering the distribution shift harmless.

However, in practice, models are rarely correctly specified, and data is finite. When \(\mu^*_1 \notin \mathcal{F}\) (misspecification), the minimizers of \(R_{treated}\) and \(R_{pop}\) diverge. The standard Outcome Regression will prioritize fitting regions with high propensity scores (high density in training) at the expense of regions with low propensity scores (low density in training but potentially high density in the population). This creates a bias in the ATE estimate, necessitating the robust methods we discuss next.


\section{IPWRA as Importance Weighted Risk Minimization}
\label{sec:IPWRAimportance}

In the previous section, we established that standard Outcome Regression minimizes risk on the treated sub-population \(P(X|Z=1)\). While this yields the correct population minimizer under correct specification, it fails under model misspecification due to the distribution shift induced by the propensity score \(e(X)\). We now introduce Inverse Probability Weighted Regression Adjustment (IPWRA) not merely as a heuristic combination of methods, but as a principled solution to this covariate shift problem via importance sampling.

\subsection{Correcting the Covariate Shift}

To estimate the ATE, we desire a model \(\mu_1\) that minimizes the risk over the target population distribution \(P(X)\). However, we only observe outcomes for the treated units, distributed as \(P(X|Z=1)\).
Recall that the density ratio between the target and training distributions is:
\begin{equation}
    w(x) = \frac{P(X)}{P(X|Z=1)} \propto \frac{P(X)}{e(X)P(X)} \propto \frac{1}{e(X)}
\end{equation}
where the overlap assumption guarantees \(e(X) > 0\).

IPWRA explicitly corrects this shift by weighting the ERM objective. Instead of standard MSE, we solve:
\begin{equation}
    \hat{\theta}_1 = \arg\min_{\theta} \sum_{i: Z_i=1} \frac{1}{\hat{e}(X_i)} \left( Y_i - \mu(X_i; \theta) \right)^2
\end{equation}
By weighting each training point by the inverse of its probability of selection, we construct a pseudo-population that mimics the covariate distribution of the full population. Consequently, the learning algorithm "sees" the data as if it were drawn from \(P(X)\), forcing the model to prioritize errors in regions of the covariate space that are rare in the treatment group but common in the general population.

\subsection{Double Robustness via Bias Elimination}

The power of IPWRA lies in its double robustness: the ATE estimate remains consistent if \textit{either} the propensity model \(\hat{e}(X)\) \textit{or} the outcome model \(\hat{\mu}(X)\) is correctly specified. We provide a proof of this property, specifically highlighting the role of the intercept term in absorbing bias.

\begin{theorem}[Double Robustness of IPWRA]
\label{thm:DRIPWRA}
Let \(\hat{\tau}_{IPWRA} = \frac{1}{n} \sum_{i=1}^n (\hat{\mu}_1(X_i) - \hat{\mu}_0(X_i))\) be the ATE estimator, where \(\hat{\mu}_z\) is trained via weighted MSE with weights \(W_i = \frac{1}{\hat{e}(X_i)}\) (for \(Z=1\)) and a free intercept term. The estimator is consistent for \(\tau\) if:
\begin{enumerate}
    \item[(a)] The propensity score model is correct (i.e., \(\hat{e}(X) \to e(X)\)), even if the outcome model is misspecified; OR
    \item[(b)] The outcome model is correctly specified (i.e., \(\mu^*_z \in \mathcal{F}\)), even if the propensity weights are incorrect.
\end{enumerate}
\end{theorem}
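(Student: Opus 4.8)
The plan is to exploit the first-order condition induced by the free intercept to recast the IPWRA plug-in estimator as an Augmented IPW (AIPW) estimator, and then to verify the double robustness of the latter by a direct conditional-expectation calculation. I would treat the two potential-outcome means $\mathbb{E}[Y(1)]$ and $\mathbb{E}[Y(0)]$ separately and symmetrically, establishing that $\frac{1}{n}\sum_i \hat\mu_1(X_i) \to \mathbb{E}[Y(1)]$ (and analogously for the control arm) under either condition (a) or (b); subtracting the two limits and invoking the identification formula \eqref{eq:identification} then yields $\hat\tau_{IPWRA}\to\tau$.

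The crucial first step is the normal equation for the intercept. Since $\hat\mu_1$ is fit by weighted least squares over the treated units with a free intercept term, differentiating the weighted objective with respect to the intercept coefficient and setting the derivative to zero gives the weighted-residual identity
$$\sum_{i:Z_i=1}\frac{1}{\hat e(X_i)}\bigl(Y_i-\hat\mu_1(X_i)\bigr)=0,\quad\text{equivalently}\quad \frac{1}{n}\sum_{i=1}^n\frac{Z_i}{\hat e(X_i)}\bigl(Y_i-\hat\mu_1(X_i)\bigr)=0.$$
This is precisely the sense in which the intercept \emph{absorbs bias}: it forces the inverse-propensity-weighted residuals to have zero mean, independently of how the remaining coefficients are chosen. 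Adding this vanishing term to the plug-in mean $\frac{1}{n}\sum_i\hat\mu_1(X_i)$ shows that the IPWRA estimator of $\mathbb{E}[Y(1)]$ is algebraically identical to the AIPW estimator $\frac{1}{n}\sum_i\bigl[\hat\mu_1(X_i)+\frac{Z_i}{\hat e(X_i)}(Y_i-\hat\mu_1(X_i))\bigr]$.

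Next I would pass to the probability limit. Let $\hat\mu_1\to\tilde\mu_1$ and $\hat e\to\tilde e$ denote the (possibly misspecified) limits of the two nuisance fits; Overlap guarantees $1/\hat e$ is bounded, so the law of large numbers applies to each average. Conditioning on $X$, substituting $ZY=ZY(1)$, and applying Ignorability ($Z\perp Y(1)\mid X$) to factor $\mathbb{E}[Z(Y(1)-\tilde\mu_1(X))\mid X]=e(X)(\mu^*_1(X)-\tilde\mu_1(X))$, the limit becomes $\mathbb{E}\bigl[\tilde\mu_1(X)+\frac{e(X)}{\tilde e(X)}(\mu^*_1(X)-\tilde\mu_1(X))\bigr]$. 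Under condition (a), $\tilde e=e$ forces the ratio to $1$ and the expression collapses to $\mathbb{E}[\mu^*_1(X)]=\mathbb{E}[Y(1)]$ regardless of $\tilde\mu_1$; under condition (b), $\tilde\mu_1=\mu^*_1$ annihilates the correction term and leaves $\mathbb{E}[\mu^*_1(X)]=\mathbb{E}[Y(1)]$ regardless of $\tilde e$. An identical computation for the control arm, using the intercept condition $\frac{1}{n}\sum_i\frac{1-Z_i}{1-\hat e(X_i)}(Y_i-\hat\mu_0(X_i))=0$, gives $\mathbb{E}[Y(0)]$ in both cases.

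I expect the main obstacle to be conceptual rather than computational: recognizing that the free intercept is exactly what upgrades the naive outcome-regression plug-in into the AIPW functional, so that IPWRA inherits the familiar doubly-robust cancellation. Once the intercept normal equation is in hand, the remaining algebra is routine. Secondary technical care is needed to justify the exchange of limits—namely uniform convergence of $\hat\mu_z$ and $\hat e$ to their limits and the boundedness of the weights secured by the Overlap assumption—which I would handle under standard regularity conditions rather than spell out in full.
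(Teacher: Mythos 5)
Your case (a) argument is sound and is, in substance, the paper's own proof: the normal equation for the free intercept forces the inverse-propensity-weighted residuals to sum to zero, and when \(\hat e = e\) the factor \(e(x)\) cancels against the treated density \(P(x \mid Z=1) \propto e(x)P(x)\), so the population bias \(\mathbb{E}_{pop}[\hat\mu_1(X) - Y(1)]\) vanishes and the plug-in is consistent. Your repackaging of this as an exact algebraic identity between the IPWRA plug-in and the AIPW functional is a clean way to organize the computation (and echoes the paper's later ``Sufficiency of AIPW'' theorem), but the underlying cancellation is identical to the paper's.

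Case (b) is where you have a genuine gap. You write that ``under condition (b), \(\tilde\mu_1 = \mu^*_1\) annihilates the correction term,'' but condition (b) only states \(\mu^*_1 \in \mathcal{F}\); it does not, by itself, give you that the weighted-MSE fit with \emph{incorrect} weights has probability limit \(\mu^*_1\). That implication is precisely the content of case (b), and it is what the paper actually proves: since squared error is a strictly proper scoring rule, \(\mu^*_1\) minimizes the conditional risk \(\mathbb{E}\bigl[(Y(1) - f(X))^2 \mid X = x\bigr]\) pointwise at every \(x\), and because the wrong weights \(1/\tilde e(x)\) are strictly positive under Overlap, any positively-weighted aggregation of these pointwise risks is still minimized over \(\mathcal{F}\) by \(\mu^*_1\); hence the weighted ERM limit is \(\mu^*_1\) no matter what the weights are (this is the paper's invariance-to-distribution-shift argument from Section \ref{sec:OR}, reused in its proof). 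As written, your assertion \(\tilde\mu_1 = \mu^*_1\) assumes the conclusion of case (b). The fix is short, and note that once you have it, case (b) does not even need the AIPW identity: the plug-in \(\frac{1}{n}\sum_i \hat\mu_1(X_i) \to \mathbb{E}[\mu^*_1(X)] = \mathbb{E}[Y(1)]\) directly.
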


\begin{proof}
We focus on the identification of \(\mathbb{E}[Y(1)]\); the proof for \(\mathbb{E}[Y(0)]\) is symmetric.

\textbf{Case (a): Correct Propensity, Misspecified Outcome.}
Assume \(\hat{e}(X) = e(X)\) (or converges to it). The outcome model \(\mu(X; \theta)\) is trained by minimizing the weighted squared error. Let the model take the form \(\mu(X; \theta) = \beta_0 + f(X; \beta_{rest})\), where \(\beta_0\) is a free intercept.
The First Order Condition for the intercept \(\beta_0\) in the weighted least squares optimization requires that the derivative of the loss with respect to \(\beta_0\) is zero:
\begin{equation}
    \frac{\partial \mathcal{L}}{\partial \beta_0} = -2 \sum_{i: Z_i=1} \frac{1}{e(X_i)} (Y_i - \hat{\mu}_1(X_i)) = 0
\end{equation}
In the infinite sample limit, this empirical sum converges to the expectation over the treated distribution:
\begin{equation}
    \mathbb{E}_{X|Z=1} \left[ \frac{1}{e(X)} (Y(1) - \hat{\mu}_1(X)) \right] = 0
\end{equation}
Expanding the expectation with respect to the full population density \(P(X)\):
\begin{equation}
    \int \frac{1}{e(x)} (\mathbb{E}[Y(1)|x] - \hat{\mu}_1(x)) \underbrace{e(x) P(x)}_{P(x|Z=1)} dx = 0
\end{equation}
The propensity scores \(e(x)\) cancel out, yielding:
\begin{equation}
    \int (\mathbb{E}[Y(1)|x] - \hat{\mu}_1(x)) P(x) dx = 0 \implies \mathbb{E}_{pop}[\hat{\mu}_1(X) - Y(1)] = 0
\end{equation}
This implies that the population average of the predicted values equals the population average of the true potential outcomes. The "free intercept" acts as a bias absorption term. Even if the shape of \(\hat{\mu}_1(X)\) is completely wrong (misspecified), the weighting ensures the \textit{average} bias is zero. Thus, \(\hat{\tau}\) is consistent.

\textbf{Case (b): Correct Outcome, Incorrect Propensity.}
Assume \(\mu^*_1 \in \mathcal{F}\) (correct specification). In this case, we rely on the result from Section \ref{sec:OR}. Minimizing the weighted risk:
\begin{equation}
    \arg\min_{f} \mathbb{E}_{pop} \left[ \frac{e(X)}{\hat{e}(X)} (Y(1) - f(X))^2 \right]
\end{equation}
is equivalent to minimizing the unweighted risk, provided the weights are non-zero (Overlap). Since the model is correctly specified, the minimum of the loss is achieved at the true conditional mean \(\mu^*_1(X)\), where the irreducible error is minimized pointwise. The specific values of the weights \(\frac{1}{\hat{e}(X)}\) become irrelevant to the solution \(\hat{\mu}_1\), as the "true" function minimizes the loss at every point \(x\).
\end{proof}

\subsection{Hajek Estimator as Restricted IPWRA}

The connection between regression adjustment and propensity weighting is often presented as a choice between two distinct methodologies. We argue instead that they are members of the same family. Specifically, the canonical Hajek IPW estimator is mathematically equivalent to the IPWRA estimator restricted to a hypothesis class of constant functions.

Consider the IPWRA optimization problem for the treated group where the regression model is restricted to a simple intercept \(\mu(X) = \beta_0\):
\begin{equation}
    \hat{\beta}_0 = \arg\min_{\beta} \sum_{i: Z_i=1} \frac{1}{\hat{e}(X_i)} (Y_i - \beta)^2
\end{equation}
The closed-form solution to this weighted least squares problem is the weighted mean of the outcomes:
\begin{equation}
    \hat{\beta}_0 = \frac{\sum_{i=1}^n \frac{Z_i Y_i}{\hat{e}(X_i)}}{\sum_{i=1}^n \frac{Z_i}{\hat{e}(X_i)}}
\end{equation}
This is exactly the definition of the Hajek estimator for \(\mathbb{E}[Y(1)]\). The same logic applies to the control group. Consequently, the Hajek ATE estimator can be viewed as an IPWRA estimator with a "dummy" model that assumes homogeneous treatment effects (i.e., \(\mu_1(X) = \mu_1\) and \(\mu_0(X) = \mu_0\)).

This perspective highlights the trade-off inherent in model selection. The Hajek estimator is "safe" in that it has no complex outcome model to misspecify, but it is "rigid" because it cannot exploit covariate information to reduce variance. Full IPWRA relaxes this restriction, allowing \(\mu(X)\) to vary with \(X\), thereby potentially reducing the residual variance \((Y - \mu(X))^2\) and improving the efficiency of the estimate, provided the outcome model is not grossly misspecified.

In summary, IPWRA provides a safety net: it recovers the population ATE by strictly matching the data distribution via weighting (Case a) or by relying on the model's capacity to find the truth regardless of the distribution (Case b). This duality motivates our later discussion on optimizing the "Risk Function," where we explore if we can exploit the degrees of freedom in the model to achieve robustness even when \textit{both} components are imperfect.


\section{Beyond Individual Identification: The ATE Risk Function}
\label{sec:ATErisk}

So far, we have motivated estimators like Outcome Regression and IPWRA based on their ability to correctly identify the potential outcome means \(\mathbb{E}[Y(1)]\) and \(\mathbb{E}[Y(0)]\) individually. The implicit philosophy is that to estimate the difference \(\tau\), one must first perfectly estimate the components. However, this component-wise approach is akin to solving a harder problem than necessary. In machine learning terms, standard causal estimators optimize a loss function that upper-bounds the true objective. In this section, we define the true "ATE Risk Function" and demonstrate that the conditions imposed by IPWRA and AIPW are sufficient, but not necessary, to minimize it.

\subsection{The Bias Cancellation Condition}

The ultimate goal is to minimize the error of the ATE estimator \(\hat{\tau}\). In the limit of infinite data (or considering the consistency of the estimator), we are concerned with the Squared Bias of the ATE. We define the \textbf{ATE Risk Function} \(\mathcal{R}_{ATE}\) as:
\begin{equation}
\label{eq:ate_risk}
\mathcal{R}_{ATE}(\mu_1, \mu_0) = \left( \mathbb{E}_{pop}[\mu_1(X) - \mu_0(X)] - \mathbb{E}_{pop}[Y(1) - Y(0)] \right)^2
\end{equation}
By linearity of expectation, we can decompose this risk into the bias of the treated model and the bias of the control model. Let \(B_1(\mu_1) = \mathbb{E}_{pop}[\mu_1(X) - Y(1)]\) and \(B_0(\mu_0) = \mathbb{E}_{pop}[\mu_0(X) - Y(0)]\). The risk becomes:
\begin{equation}
\mathcal{R}_{ATE} = (B_1(\mu_1) - B_0(\mu_0))^2
\end{equation}
Standard approaches (OR, IPWRA) aim to achieve \(B_1 = 0\) and \(B_0 = 0\). However, it is immediately obvious that \(\mathcal{R}_{ATE} = 0\) whenever:
\begin{equation}
    B_1(\mu_1) = B_0(\mu_0)
\end{equation}
This \textit{Bias Cancellation Condition} implies that we do not need unbiased models of the potential outcomes; we only need models whose biases are equal in expectation.

\subsection{Sufficiency of Standard Estimators}

We now prove that the optimization objectives of IPWRA and AIPW (the Canonical Doubly Robust Estimator) essentially enforce the stricter, sufficient condition \(B_1 = B_0 = 0\), rather than the necessary condition \(B_1 = B_0\).

\begin{theorem}[Sufficiency of IPWRA]
Assume we have true propensity scores \(e(X)\). If the outcome models \(\mu_1\) and \(\mu_0\) are trained via IPWRA (weighted MSE with free intercept), then \(B_1(\mu_1) = 0\) and \(B_0(\mu_0) = 0\), implying \(\mathcal{R}_{ATE} = 0\). This condition is sufficient but not necessary.
\end{theorem}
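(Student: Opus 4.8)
The plan is to observe that the sufficiency claim follows almost immediately from Case (a) of Theorem \ref{thm:DRIPWRA}, so that the substantive content of the statement lies in establishing non-necessity by explicit construction. First I would invoke the First Order Condition for the free intercept $\beta_0$ in the weighted least squares problem for the treated model. Since we are given the true propensity scores $e(X)$, the stationarity condition $\partial \mathcal{L} / \partial \beta_0 = 0$ yields, in the infinite-sample limit, the moment equation $\mathbb{E}_{X|Z=1}[e(X)^{-1}(Y(1) - \hat{\mu}_1(X))] = 0$. Rewriting the treated-conditional expectation against the population density via $dP(x|Z=1) = e(x)\, dP(x)/P(Z=1)$, the inverse-propensity weight cancels the $e(x)$ factor exactly, collapsing the equation to $\mathbb{E}_{pop}[\hat{\mu}_1(X) - Y(1)] = 0$, which is precisely $B_1(\mu_1) = 0$.

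By the symmetric argument for the control group—using weights $1/(1 - \hat{e}(X))$ on the units with $Z_i = 0$—the same cancellation gives $B_0(\mu_0) = 0$. Substituting both into the definition $\mathcal{R}_{ATE} = (B_1(\mu_1) - B_0(\mu_0))^2$ then yields $\mathcal{R}_{ATE} = (0 - 0)^2 = 0$, establishing sufficiency.

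For non-necessity, I would exhibit a pair of biased models whose biases coincide. Starting from any IPWRA-trained pair $(\mu_1, \mu_0)$ satisfying $B_1 = B_0 = 0$, define the shifted models $\tilde{\mu}_1(X) = \mu_1(X) + c$ and $\tilde{\mu}_0(X) = \mu_0(X) + c$ for an arbitrary constant $c \neq 0$. By linearity of expectation, $B_1(\tilde{\mu}_1) = c$ and $B_0(\tilde{\mu}_0) = c$, so both models are individually biased, yet $\mathcal{R}_{ATE} = (c - c)^2 = 0$. This demonstrates that $B_1 = B_0 = 0$ is strictly stronger than the Bias Cancellation Condition $B_1 = B_0$ needed to annihilate the ATE risk, proving the condition enforced by IPWRA is not necessary.

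The main obstacle here is conceptual rather than technical: because the sufficiency direction is a near-immediate corollary of the double robustness result, the real work is to make the non-necessity argument precise and to argue that such bias-cancelling pairs are genuinely reachable under misspecification rather than being mere formal artifacts. In particular, I would emphasize that the shift construction shows the IPWRA objective \emph{over-determines} the solution, pinning down a degree of freedom (the common intercept offset) that the ATE risk itself leaves entirely unconstrained; this slack is exactly the ``degrees of freedom'' the paper subsequently proposes to exploit.
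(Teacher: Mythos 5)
Your proof is correct and follows essentially the same route as the paper: the first-order condition for the free intercept, combined with the exact cancellation of \(e(x)\) against the treated-conditional density, yields \(B_1(\mu_1) = B_0(\mu_0) = 0\) and hence \(\mathcal{R}_{ATE} = 0\), and non-necessity is established by adding a common shift to both outcome models. The only cosmetic difference is that the paper shifts by an arbitrary function \(f(X) \neq 0\) while you shift by a constant \(c \neq 0\); your choice is in fact slightly cleaner, since a constant shift guarantees both biases are genuinely nonzero (whereas a mean-zero \(f\) would leave \(B_1 = B_0 = 0\) intact and thus fail to witness non-necessity of the bias conditions themselves).
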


\begin{proof}
As shown in Section \ref{sec:IPWRAimportance}, the First Order Condition of the weighted least squares objective with a free intercept ensures that the weighted residuals sum to zero. In the population limit:
\begin{equation}
    \mathbb{E}_{X \sim P(X|Z=1)} \left[ \frac{1}{e(X)} (Y(1) - \mu_1(X)) \right] = 0
\end{equation}
Correcting for the distribution shift yields \(\mathbb{E}_{pop}[Y(1) - \mu_1(X)] = 0\), so \(B_1(\mu_1) = 0\). Similarly, \(B_0(\mu_0) = 0\). 
The risk is \((0 - 0)^2 = 0\). 
To see why this is not necessary, consider a simpler model pair \(\tilde{\mu}_1(X) = \mu_1(X) + f(X)\) and \(\tilde{\mu}_0(X) = \mu_0(X) + f(X)\) for some function \(f(X) \neq 0\). These models are misspecified, yet \(\mathcal{R}_{ATE}(\tilde{\mu}_1, \tilde{\mu}_0) = (f(X) - f(X))^2 = 0\). IPWRA disallows this solution, forcing \(f(X)=0\).
\end{proof}

\begin{theorem}[Sufficiency of AIPW]
The Augmented IPW estimator constructs an estimate \(\hat{\tau}_{AIPW} = \hat{\mu}_{1, DR} - \hat{\mu}_{0, DR}\), where \(\hat{\mu}_{z, DR}\) is the model prediction corrected by the inverse-weighted residual. If the propensity scores are correct, \(\mathbb{E}[\hat{\mu}_{1, DR}] = \mathbb{E}[Y(1)]\) and \(\mathbb{E}[\hat{\mu}_{0, DR}] = \mathbb{E}[Y(0)]\), minimizing the risk via individual unbiasedness.
\end{theorem}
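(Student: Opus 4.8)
The plan is to compute the population expectation of $\hat{\mu}_{1, DR}$ directly and show that the bias of the (possibly misspecified) outcome model is exactly annihilated by the augmentation term, leaving $\mathbb{E}[Y(1)]$ whenever the propensity is correct. I would begin by pinning down the canonical form of the AIPW component—the model prediction plus an inverse-propensity-weighted residual correction:
\begin{equation}
\hat{\mu}_{1, DR} = \frac{1}{n}\sum_{i=1}^n \left[ \hat{\mu}_1(X_i) + \frac{Z_i}{e(X_i)}\bigl(Y_i - \hat{\mu}_1(X_i)\bigr) \right].
\end{equation}
Passing to the infinite-sample limit, the target reduces to $\mathbb{E}[\hat{\mu}_{1, DR}] = \mathbb{E}[\hat{\mu}_1(X)] + \mathbb{E}\bigl[\frac{Z}{e(X)}(Y - \hat{\mu}_1(X))\bigr]$, so the whole argument hinges on evaluating the augmentation expectation.

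The key steps, in order, are: (i) invoke the Consistency component of SUTVA to replace the observed $Y$ inside the augmentation term by the potential outcome $Y(1)$ on the event $Z=1$, so that $Z(Y - \hat{\mu}_1(X)) = Z(Y(1) - \hat{\mu}_1(X))$; (ii) condition on $X$ and apply Ignorability, which gives $Z \perp Y(1) \mid X$ and lets the conditional expectation factor as $\mathbb{E}[Z \mid X]\,\mathbb{E}[Y(1) - \hat{\mu}_1(X) \mid X]$, using that $\hat{\mu}_1(X)$ is a fixed function of $X$; (iii) substitute $\mathbb{E}[Z \mid X] = e(X)$ so the propensity in the numerator cancels the $1/e(X)$ weight exactly, leaving $\mathbb{E}[Y(1) \mid X] - \hat{\mu}_1(X)$; and (iv) take the outer expectation over $X$ and observe that the $\mathbb{E}[\hat{\mu}_1(X)]$ terms cancel, yielding $\mathbb{E}[\hat{\mu}_{1, DR}] = \mathbb{E}[Y(1)]$, i.e. $B_1(\hat{\mu}_1)=0$. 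The symmetric computation with weight $(1-Z)/(1-e(X))$ handles the control arm, giving $B_0(\hat{\mu}_0)=0$, and subtracting the two individually unbiased components drives $\mathcal{R}_{ATE}=0$ through the stronger condition $B_1=B_0=0$.

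I expect the main obstacle to be step (ii): the factorization of the conditional expectation is precisely where correctness of the propensity model is load-bearing, and the cancellation $e(X)/e(X)=1$ silently relies on the Overlap assumption to keep the weight well-defined. It is worth emphasizing in the writeup that this derivation never uses correctness of $\hat{\mu}_1$—any measurable outcome model produces an unbiased estimate once $e(X)$ is correct—which is exactly the content of the ``sufficiency via individual unbiasedness'' claim. To close, I would connect back to the framing of Section \ref{sec:ATErisk}: AIPW, like IPWRA, forces each bias term to zero separately rather than merely enforcing the necessary cancellation condition $B_1 = B_0$, so the same ``wasted degrees of freedom'' critique that motivates the Joint Robust Estimator applies here as well.
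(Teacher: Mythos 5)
Your proposal is correct and follows essentially the same route as the paper's proof: both compute the expectation of the inverse-weighted residual correction term, use ignorability together with the correct propensity score to cancel $e(X)$ against the weight, and conclude $\mathbb{E}[\hat{\mu}_{1,DR}] = \mathbb{E}[Y(1)]$ so that $B_1 = B_0 = 0$ holds individually. Your write-up is simply more explicit about the intermediate steps (SUTVA consistency, conditioning on $X$, and the tower property) that the paper compresses into the shorthand substitution $Z \mapsto e(X)$ inside the expectation.
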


\begin{proof}
The AIPW estimator for the treated mean is:
\begin{equation}
    \hat{\mu}_{1, DR} = \mathbb{E}_{pop}[\mu_1(X)] + \mathbb{E}_{pop}\left[ \frac{Z(Y - \mu_1(X))}{e(X)} \right]
\end{equation}
Under the assumption of correct propensity scores and ignorability, the expectation of the correction term is:
\begin{equation}
    \mathbb{E}\left[ \frac{Z}{e(X)}(Y(1) - \mu_1(X)) \right] = \mathbb{E}\left[ \frac{e(X)}{e(X)}(Y(1) - \mu_1(X)) \right] = \mathbb{E}[Y(1)] - \mathbb{E}[\mu_1(X)]
\end{equation}
Substituting this back:
\begin{equation}
    \mathbb{E}[\hat{\mu}_{1, DR}] = \mathbb{E}[\mu_1(X)] + (\mathbb{E}[Y(1)] - \mathbb{E}[\mu_1(X)]) = \mathbb{E}[Y(1)]
\end{equation}
Thus, the AIPW mechanism explicitly calculates the bias \(B_1\) and subtracts it, enforcing the result to be unbiased individually. Like IPWRA, it effectively targets \(B_1=0\) and \(B_0=0\), satisfying the sufficient condition for risk minimization.
\end{proof}

In conclusion, standard doubly robust estimators are "over-constrained." They utilize the propensity score information to purge bias from \(\mu_1\) and \(\mu_0\) independently. While safe, this strategy ignores the degrees of freedom available in the joint space of \((\mu_1, \mu_0)\). Recognizing that we only need \(B_1 = B_0\) opens the door for new optimization strategies: we can potentially tolerate large individual model biases—which might allow for lower variance or simpler models—as long as we can ensure those biases are structurally aligned.


\section{Optimizing for Robustness: Exploiting Model Freedom}
\label{sec:robustness}

The revelation that standard doubly robust estimators solve a harder problem than necessary naturally raises the question: can we do better? If the condition \(B_1 = B_0\) is easier to satisfy than \(B_1 = B_0 = 0\), should we not simply minimize the ATE Risk Function directly? In this section, we argue that while direct minimization under a single fixed propensity score is ill-posed, this surplus of model capacity provides a unique opportunity to robustify our estimates against the greatest source of instability in causal inference: the uncertainty of the propensity score itself.

\subsection{The Indeterminacy of Direct Minimization}

Consider the task of finding functions \(\mu_1\) and \(\mu_0\) to minimize \(\mathcal{R}_{ATE}\) given a single, estimated propensity function \(\hat{e}(X)\). The objective is to ensure:
\begin{equation}
    \mathbb{E}_{pop}\left[\frac{Z(Y - \mu_1(X))}{\hat{e}(X)}\right] \approx \mathbb{E}_{pop}\left[\frac{(1-Z)(Y - \mu_0(X))}{1-\hat{e}(X)}\right]
\end{equation}
This problem is heavily underdetermined. As noted in Section \ref{sec:ATErisk}, infinitely many pairs of functions satisfy this condition in expectation. In fact, the trivial solution where \(\mu_1(X) = \hat{\beta}_1\) and \(\mu_0(X) = \hat{\beta}_0\) (constants) is sufficient to drive the empirical risk to zero, provided we calculate them via weighted means. (This recovers the Hajek estimator.)

Consequently, simply "relaxing" the constraints of IPWRA to target the ATE risk directly with a fixed \(\hat{e}(X)\) does not necessarily yield a better estimator; it merely yields a non-unique one. Without further regularization or objectives, the optimization landscape is flat, and the "optimal" solution is quite arbitrary. However, this indeterminacy changes the moment we acknowledge that \(\hat{e}(X)\) is not a fixed truth, but a random variable with estimation error.

\subsection{Robustness via Joint Optimization}

In observational studies, \(\hat{e}(X)\) is almost always misspecified or subject to finite-sample variance. Standard methods crumble under this uncertainty: if \(\hat{e}(X)\) is wrong, the weights \(1/\hat{e}(X)\) are wrong, and the enforced condition \(B_1(\mu_1) = 0\) fails to hold (leading to bias in IPWRA).

However, the "degrees of freedom" identified in Section \ref{sec:ATErisk} allow us to construct a defense. Since we do not strictly need \(B_1=0\) and \(B_0=0\), we can seek a pair \((\mu_1, \mu_0)\) that maintains the cancellation condition \(B_1(\mu_1) \approx B_0(\mu_0)\) \textit{across a distribution of plausible propensity scores}.

Let \(\mathcal{E}\) be a belief distribution over the true propensity score function (e.g., a posterior from a Bayesian model or a bootstrap distribution). We propose minimizing the expected ATE Risk over this uncertainty:
\begin{equation}
    \min_{\mu_1, \mu_0} \mathbb{E}_{e \sim \mathcal{E}} \left[ \mathcal{R}_{ATE}(\mu_1, \mu_0; e) \right]
\end{equation}
Substituting the bias decomposition, this is equivalent to minimizing:
\begin{equation}
    \min_{\mu_1, \mu_0} \mathbb{E}_{e \sim \mathcal{E}} \left[ \left( \mathbb{E}_{pop}\left[\frac{Z(Y - \mu_1)}{e(X)}\right] - \mathbb{E}_{pop}\left[\frac{(1-Z)(Y - \mu_0)}{1-e(X)}\right] \right)^2 \right]
\end{equation}
This objective fundamentally transforms the role of the outcome model. Instead of fitting the outcome \(Y\) perfectly (to minimize variance) or eliminating bias for a single \(\hat{e}\) (standard IPWRA), the outcome models \(\mu_1\) and \(\mu_0\) are now trained to be \textit{structurally coupled}. The optimization effectively searches for functions where the error induced by a perturbation in \(e(X)\) on the treated side is counterbalanced by the error induced on the control side.

By exploiting the freedom to be biased—as long as that bias is symmetric—we can find solutions that are stable even when the propensity score is highly uncertain. The "wasted capacity" of standard methods is thus repurposed to purchase robustness.


\section{A Robust Estimation Algorithm}
\label{sec:alg}

Having established the theoretical motivation for minimizing the expected ATE Risk over a distribution of propensity scores, we now propose a concrete algorithm to achieve this. The core idea is to replace the standard "point estimate" workflow of causal inference with a "distributionally robust" workflow. Instead of conditioning on a single estimated propensity vector \(\hat{e}\), we generate a collection of plausible propensity environments and solve for the outcome models that minimize the aggregate bias across these environments.

\subsection{The General Algorithm}

Let \(\mathcal{E}\) represent a distribution of propensity score functions that captures our epistemic uncertainty about the treatment assignment mechanism. This distribution could be derived from Bayesian posteriors, ensemble methods, or other uncertainty quantification techniques. The goal is to find outcome model parameters \(\theta_1, \theta_0\) that minimize the expected squared difference in bias.

The general procedure is as follows:

\begin{enumerate}
    \item \textbf{Uncertainty Quantification:}
    Sample \(B\) propensity score functions \(\{e^{(1)}, e^{(2)}, \dots, e^{(B)}\}\) from the distribution \(\mathcal{E}\). Each sample \(e^{(b)}(X)\) represents a plausible "world" of treatment assignment probabilities consistent with the observed data.

    \item \textbf{Construct Robust Loss:}
    Define the empirical ATE Risk across these \(B\) worlds. For a given set of parameters \((\theta_1, \theta_0)\), the loss is the average squared difference between the weighted residuals of the treated and control groups:
    \begin{equation}
        \mathcal{L}_{robust}(\theta_1, \theta_0) = \frac{1}{B} \sum_{b=1}^B \left( \hat{B}_1^{(b)}(\theta_1) - \hat{B}_0^{(b)}(\theta_0) \right)^2
    \end{equation}
    where \(\hat{B}_z^{(b)}\) is the estimated bias in world \(b\), calculated via importance sampling:
    \begin{equation}
        \hat{B}_1^{(b)}(\theta_1) = \frac{1}{N} \sum_{i: Z_i=1} \frac{1}{e^{(b)}(X_i)} (Y_i - \mu_1(X_i; \theta_1))
    \end{equation}
    \begin{equation}
        \hat{B}_0^{(b)}(\theta_0) = \frac{1}{N} \sum_{i: Z_i=0} \frac{1}{1 - e^{(b)}(X_i)} (Y_i - \mu_0(X_i; \theta_0))
    \end{equation}

    \item \textbf{Joint Optimization:}
    Minimize \(\mathcal{L}_{robust}\) with respect to \(\theta_1\) and \(\theta_0\) simultaneously. Unlike standard methods that solve for \(\theta_1\) and \(\theta_0\) in isolation, this step forces the optimizer to find a solution where the biases are structurally correlated. If a parameter update reduces bias in world \(b\) for the treated group, the joint loss encourages a corresponding move in the control group to maintain the cancellation condition \(B_1 \approx B_0\).

    \item \textbf{Estimation:}
    Once the optimal parameters \(\hat{\theta}_1, \hat{\theta}_0\) are found, compute the ATE on the full sample:
    \begin{equation}
        \hat{\tau} = \frac{1}{N} \sum_{i=1}^N \left( \mu_1(X_i; \hat{\theta}_1) - \mu_0(X_i; \hat{\theta}_0) \right)
    \end{equation}
\end{enumerate}

\subsection{A Bootstrap Realization}

For the purpose of demonstration in this project, we implement a specific realization of this framework where the uncertainty distribution \(\mathcal{E}\) is approximated using the nonparametric bootstrap.

\textbf{Bootstrap as a Proxy for Uncertainty:}
We generate the collection of propensity scores \(\{e^{(b)}\}_{b=1}^B\) by resampling the original dataset \(N\) times with replacement and fitting a logistic regression model to each resampled dataset. We then use these \(B\) models to predict the propensity scores for the \textit{original} units.

It is important to note that this is a "naive" realization of the robust framework. The bootstrap primarily captures the variance of the parameter estimates due to sampling noise (finite sample bias). It does not explicitly account for \textit{model misspecification} of the propensity score itself (e.g., if the true propensity is nonlinear but we fit a linear logistic regression). A more advanced implementation might use Bayesian non-parametric models (like Gaussian Processes or BART) or ensembles of varied model classes to capture deeper epistemic uncertainty.

However, even this simple bootstrap realization is sufficient to demonstrate the core mechanism: by exposing the outcome model to the "vibration" of the propensity weights, we prevent it from overfitting to a single, potentially noisy weight vector. The joint optimization finds a "sweet spot" in the parameter space that remains stable across the bootstrap distribution, thereby minimizing the ATE Risk more effectively than standard point-estimation methods.


\section{Simulation Study}

To validate the theoretical insights discussed in Section \ref{sec:alg}, we conducted a demonstrative simulation study comparing the standard IPWRA estimator against our proposed Joint Robust Estimator (JRE). The primary goal of this experiment is not to claim universal superiority, but to illustrate the specific mechanism of our method: utilizing the estimated uncertainty of the propensity score to mitigate risk when the outcome model is misspecified and propensity estimation is subject to uncertainty.

\subsection{Data Generating Process}

We designed a data-generating process where the propensity score model is correctly specified (allowing for valid bootstrap uncertainty estimation), but the outcome model varies in its degree of misspecification. We generated \(N\) independent and identically distributed units with covariates \(X \sim \mathcal{N}(0, I_5)\).

\textbf{Treatment Assignment:}
The treatment \(Z\) was assigned according to a logistic model:
\begin{equation}
    P(Z=1|X) = \sigma(0.5 X_1 - 0.5 X_2)
\end{equation}
This ensures that the propensity score is identifiable and that standard logistic regression (used in both IPWRA and JRE) is the correct model class.

\textbf{Outcome Model and Misspecification:}
We introduced a mixing parameter \(t \in [0, 1]\) to control the severity of misspecification. We defined a "correct" linear outcome surface (\(Y^c\)) and a "misspecified" nonlinear surface (\(Y^m\)):
\begin{align*}
    Y^c(0) &= X_1 + X_2 + \epsilon \\
    Y^m(0) &= X_1 + X_2 + 0.5(X_1^2 + X_2^2) + \epsilon \\
    Y(0) &= (1-t)Y^c(0) + t Y^m(0)
\end{align*}
The potential outcome under treatment, \(Y(1)\), was constructed to introduce heterogeneous treatment effects dependent on the misspecification level \(t\):
\begin{equation}
    Y(1) = Y(0) + 2 + t(0.5 X_3 + 0.2 X_1 X_2)
\end{equation}
Consequently, when \(t=0\), the treatment effect is a constant \(\tau=2\) (Homogeneous). When \(t > 0\), the treatment effect becomes Heterogeneous (\(\tau(X) = 2 + 0.5 X_3 + 0.2 X_1 X_2\)), introducing interaction terms that the linear estimators cannot perfectly capture.

\subsection{Results and Discussion}

We compared the MSE of the ATE estimates over 200 replications for sample sizes \(N \in \{100, 300, 500, 1000\}\). The Joint Robust Estimator used \(B=1000\) bootstrap samples to approximate the propensity uncertainty distribution \(\Pi(e)\).

Table \ref{tab:sim_results} summarizes the performance reduction (improvement) of the JRE compared to the standard IPWRA baseline.

\begin{table}[t]
\centering
\caption{Comparison of IPWRA and Joint Robust Estimator (JRE). Values represent the MSE of the ATE estimate. The "Reduction" column indicates the percentage decrease in MSE achieved by JRE relative to IPWRA.}
\label{tab:sim_results}
\begin{tabular}{llccc}
\hline
\textbf{Sample Size} (\(N\)) & \textbf{Misspecification} (\(t\)) & \textbf{IPWRA MSE} & \textbf{JRE MSE} & \textbf{Reduction (\%)} \\ \hline
\multirow{3}{*}{100}  & 0 (None) & 0.0554 & 0.0569 & -2.64\% \\
                      & 0.5 (Mild) & 0.0338 & 0.0313 & \textbf{7.44\%} \\
                      & 1 (Severe) & 0.1022 & 0.0977 & \textbf{4.47\%} \\ \hline
\multirow{3}{*}{300}  & 0 (None) & 0.0165 & 0.0165 & -0.30\% \\
                      & 0.5 (Mild) & 0.0129 & 0.0119 & \textbf{7.48\%} \\
                      & 1 (Severe) & 0.0330 & 0.0284 & \textbf{13.92\%} \\ \hline
\multirow{3}{*}{500}  & 0 (None) & 0.0077 & 0.0078 & -1.28\% \\
                      & 0.5 (Mild) & 0.0082 & 0.0074 & \textbf{9.87\%} \\
                      & 1 (Severe) & 0.0214 & 0.0189 & \textbf{11.55\%} \\ \hline
\multirow{3}{*}{1000} & 0 (None) & 0.0043 & 0.0043 & 0.03\% \\
                      & 0.5 (Mild) & 0.0035 & 0.0031 & \textbf{9.87\%} \\
                      & 1 (Severe) & 0.0101 & 0.0086 & \textbf{14.88\%} \\ \hline
\end{tabular}
\end{table}

The results illuminate several key behaviors of the proposed method:

\begin{enumerate}
    \item \textbf{Cost of Robustness (\(t=0\)):} When the outcome model is correctly specified (\(t=0\)), standard IPWRA is theoretically optimal among this class of estimators. In this regime, our JRE method performs slightly worse (negative reduction) or equivalently. This is expected: the "degrees of freedom" we exploit for robustness are unnecessary here, and the additional optimization over the bootstrap distribution effectively introduces noise into the estimation process.
    
    \item \textbf{Benefit of Robustness (\(t > 0\)):} As misspecification increases, the advantage of the JRE becomes clear. At \(t=1\), where the linear model fails to capture the quadratic nature of the data, JRE reduces the MSE by approximately 11-15\% across larger sample sizes. This confirms our hypothesis: by training the outcome models to minimize risk over the distribution of propensity scores, the JRE finds a solution where the biases of \(\hat{\mu}_1\) and \(\hat{\mu}_0\) are structurally correlated, allowing them to cancel out in the final ATE calculation.
    
    \item \textbf{Finite Sample Behavior:} Interestingly, the method shows significant gains even at \(N=1000\). Since the bootstrap quantifies \textit{finite sample} uncertainty of the propensity score estimation, this suggests that even with moderate data, the variability of \(\hat{e}(X)\) is large enough to destabilize standard IPWRA, whereas JRE successfully adapts to it.
\end{enumerate}

In summary, this simulation demonstrates that when we suspect the outcome or propensity score model might be too simple for the complex reality (a common scenario in observational studies), optimizing for the ATE risk function via bootstrap uncertainty provides a tangible safety buffer against estimation error.


\section{Conclusion}

In this work, we have re-examined the fundamental challenge of causal inference—estimating the Average Treatment Effect (ATE) from observational data—through the lens of machine learning and distribution shift adaptation. By stepping away from the traditional view of "missing data imputation," we demonstrated that standard Outcome Regression is fundamentally an Empirical Risk Minimization (ERM) problem under covariate shift. Within this framework, we showed that the Inverse Probability Weighted Regression Adjustment (IPWRA) is not merely a heuristic adjustment, but a principled importance-sampling correction that aligns the training distribution (treated/controlled units) with the target distribution (population).

Our theoretical analysis revealed that standard doubly robust estimators, including the canonical AIPW and IPWRA, enforce a \textit{sufficient but not necessary} condition for ATE consistency: they aim to eliminate the bias of the potential outcome models individually. We defined the true "ATE Risk Function," \(\mathcal{R}_{ATE}\), and showed that it is minimized whenever the biases of the treated and control models cancel out (\(B_1 = B_0\)). This insight exposed a significant "wasted capacity" in standard methods, which we exploited to propose the Joint Robust Estimator (JRE). By training outcome models to minimize the expected ATE risk over a bootstrap distribution of propensity scores, the JRE utilizes the available degrees of freedom to achieve robustness against propensity estimation uncertainty, a capability we validated through simulation studies.

\subsection{Discussion and Future Directions}

We conclude by highlighting five key implications of our framework that bridge the gap between causal inference and representation learning:

\textbf{1. Unification via Distribution Shift.}
We successfully unified a disparate family of estimators—Outcome Regression, Hajek IPW, and IPWRA—under the single framework of distribution shift adaptation. We demonstrated that the Hajek estimator is simply a special case of IPWRA restricted to a constant hypothesis class. This perspective demystifies "double robustness," framing it as a result of using a flexible model class (Outcome Regression) \textit{or} perfect importance weights (Propensity Score) to minimize the population risk.

\textbf{2. The Efficacy of Joint Robustness.}
Our simulation results confirm that the Joint Robust Estimator (JRE) outperforms standard IPWRA in regimes of model misspecification and finite-sample uncertainty. By explicitly optimizing for bias cancellation rather than individual accuracy, the JRE effectively "hedges" its bets, finding a solution that is stable across the likely range of propensity values.

\textbf{3. The Need for Robust Propensity Estimation.}
Our framework relies heavily on the quality of the uncertainty quantification for the propensity score. A point estimate \(\hat{e}(X)\) is insufficient for robust inference; one must capture the distribution of likely propensity functions. Consequently, we advocate for the use of robust estimators—such as the bootstrap, Bayesian logistic regression, or ensemble methods—to generate the support for the robust optimization objective.

\textbf{4. Rethinking the Propensity Loss Function.}
While we focused on robustifying the \textit{outcome} model against propensity errors, we largely treated the propensity estimation itself as a fixed upstream task trained via Logistic Loss (Maximum Likelihood). However, as argued in the literature on Covariate Balancing Propensity Scores (CBPS) \cite{imai2014covariate}, the Logistic Loss is suboptimal for causal inference. It optimizes for the predictive accuracy of treatment assignment (\(Z|X\)), whereas the ATE estimator depends on the \textit{balancing property} of the scores (moment conditions). A propensity model could have high predictive likelihood but yield weights that explode the variance of the ATE. Future iterations of this framework should replace the Logistic Loss with objectives that directly penalize covariate imbalance, ensuring that the "center" of our uncertainty set \(\Pi(e)\) is already aligned with the estimation goal.

\textbf{5. Toward End-to-End Causal Learning.}
Finally, our work reveals a lingering inefficiency in the causal modeling pipeline. Despite merging the training of \(\mu_1\) and \(\mu_0\), we still operate within a \textbf{two-stage framework}: we first estimate propensity scores (stage 1) and then optimize outcome models (stage 2). These stages are decoupled; the propensity model does not "know" which observations are hard for the outcome model to predict, and the outcome model is reactive to a fixed propensity distribution. This separation is likely suboptimal. 
A truly optimal estimator would unify the entire process into a \textbf{single-stage} end-to-end optimization problem. Future work should explore architectures where the propensity parameters \(\theta_e\) and outcome parameters \(\theta_\mu\) are learned simultaneously to minimize the final ATE Risk. By allowing the propensity model to adjust its weights specifically to help the outcome model minimize the population risk, we could achieve a fully adaptive, distributionally robust causal estimator, under misspecification models and finite samples.

\bibliography{cite}
\end{document}